\title{Point Cloud Classification via Deep Set Linearized Optimal Transport}
\author{Scott Mahan\footnote{Department of Mathematics, University of California, San Diego}, 
Caroline Moosm\"{u}ller\footnote{Department of Mathematics, University of North Carolina at Chapel Hill},
Alexander Cloninger$^*$\footnote{Halicioglu Data Science Institute, University of California, San Diego}
}
\date{}
\begin{document}

\maketitle

\begin{abstract}
 We introduce Deep Set Linearized Optimal Transport, an algorithm designed for the efficient simultaneous embedding of point clouds into an $L^2-$space. This embedding preserves specific low-dimensional structures within the Wasserstein space while constructing a classifier to distinguish between various classes of point clouds. Our approach is motivated by the observation that $L^2-$distances between optimal transport maps for distinct point clouds, originating from a shared fixed reference distribution, provide an approximation of the Wasserstein-2 distance between these point clouds, under certain assumptions.
To learn approximations of these transport maps, we employ input convex neural networks (ICNNs) and establish that, under specific conditions, Euclidean distances between samples from these ICNNs closely mirror Wasserstein-2 distances between the true distributions. Additionally, we train a discriminator network that attaches weights these samples and creates a permutation invariant classifier
to differentiate between different classes of point clouds. We showcase the advantages of our algorithm over the standard deep set approach through experiments on a flow cytometry dataset with a limited number of labeled point clouds.

\end{abstract}

\section{Introduction}

We consider the supervised learning problem of classifying probability measures $\mu_i$ on $\R^d$. Given training data $\{(\mu_i, y_i)\}_{i=1}^N$ with labels $y_i$, we want to learn a function $f: \PC(\R^d) \to \R$ such that $f(\mu_i) \approx y_i$. In practice, instead of probability measures $\mu_i$, our training set often consists of point clouds $X_i = \{x_k^{(i)}\}_{k=1}^{n_i}$ with $x_k^{(i)} \sim \mu_i$. This problem arises in a number of applications, for example, when grouping surveys by demographics \cite{AlexPeopleMovers}, topic labeling in a bag of words model \cite{Bagofwords}, and classification of patients from cell measurements \cite{CellSubpopulations, AlexAnisotropic, FlowCytometry}. As motivating example, we consider flow cytometry, which gathers up to dozens of measurements from thousands of cells at once for subsequent analysis \cite{FlowCytometry}. These features form a point cloud sampled from the distribution over the patient's cell population, allowing to train a point cloud classifier to identify healthy or sick patients. 

Classifying probability measures is more challenging than similar tasks on Euclidean vectors, because it requires a meaningful distance and feature space for probability measures or sets of points. The natural approach to this supervised learning problem is to embed the probability measures into a Euclidean space where traditional machine learning techniques can be applied.
Existing classifiers leverage the observation that any such embedding must be invariant to the order in which points are fed in. Many methods take the approach of embedding point clouds into a voxel occupancy grid, where features can be extracted using a 3D CNN \cite{3DShapeNets, VoxNet, 3dgan}. These methods have proven useful for classifying CAD models or LiDAR data, but they are restricted to 3D point clouds. In higher dimensions, some techniques embed $\mu_i$ using its moments \cite{Moments, chew2022manifold} or through kernel embeddings \cite{KernelMethods}, but these embeddings lose information about $\mu_i$ or distort the geometry of the data space. The Deep Sets method uses neural networks to learn arbitrary permutation-invariant feature representations of point clouds, but again provides no guarantee that these embeddings approximate distances between distributions \cite{DeepSets}. 


We propose using the linear optimal transport (LOT) embedding for distribution and point cloud classification. The LOT embedding maps a distribution to its optimal transport (OT) map relative to a fixed reference distribution \cite{Wang2013,park18,berman20,Holder2}. This maps a probability distribution into an (infinite-dimensional) Hilbert space $L^2(\R^d)$, in which we can train a classifier on the embedded data. The advantage of using the LOT embedding is an isometry property on distributions that are shifts or scalings of a fixed distribution, as shown in \cite{CarolineIsometry,park18,aldroubi20}. While distributions $\mu_i$ may not be exactly related by shifts or scalings in most problems, the LOT embedding also allows for shearings and $\varepsilon$-perturbations of such,  while still creating linearly separable convex classes in LOT space \cite{CarolineIsometry, VarunShearing}.
However, the LOT embedding also performs well experimentally on a number of classification tasks, even when the underlying probability measures are not related by shifts, scalings or shearings \cite{park18,wang11,basu14}.

In general, the optimal transport map between given probability measures or point clouds is not known a priori. To learn the optimal transport maps for each distribution to a reference, we can leverage Brenier's theorem, which states that optimal transport maps are gradients of convex functions \cite{Brenier}. This allows to implement the optimal transport problem using a continuous dual solver, which trains input-convex neural networks to learn the transport maps and their inverses \cite{NeuralOTSolvers}. While this requires a lot more computation than existing point cloud classifiers, it gives the added advantage of learning generative maps for the target distributions. 

Alongside learning the optimal transport maps, we simultaneously learn a softmax classifier on the LOT embedding space. In our implementation, we found it effective to alternate between training the LOT map networks and the classifier, since we can often achieve reliable classification accuracy before the OT maps have been fully learned. As a result, our model significantly outperforms DeepSets \cite{DeepSets} for point cloud classification.

\subsection{Related Work} 

The problem of 3D object detection from point clouds generated by LiDAR or other sensor data has been studied thoroughly. Many approaches to this problem use hand-designed features of the point clouds followed by a traditional machine learning classifier. These methods include range descriptors \cite{RangeDescriptor}, histogram descriptors \cite{HistDescriptors}, track classification \cite{TrackClass}, and other manual feature extractors \cite{ShapeDists, ShapeRecog}.

Other architectures for 3D object detection learn to extract features from raw data. First efforts at this problem apply standard 2D CNNs and RNNs to RGBD data, simply treating the depth as an additional input channel \cite{2.5DCNN, 2.5DCNN2, 2.5DCNN3}. However, this approach ignores the geometric information provided by the depth data. Attempts to better utilize this information map point cloud data to occupancy grids \cite{OccGrid}, which represent the probability of each voxel being occupied in a fixed 3D grid. The 3D ShapeNets \cite{3DShapeNets}, VoxNet \cite{VoxNet}, and 3D-GAN \cite{3dgan} architectures all propose different ways to leverage 3D CNNs on occupancy grid data. Many of these methods perform well for 3D object detection, but inherently cannot classify higher-dimensional point clouds.

In arbitrary dimensions, a simple way to embed a distribution into Euclidean space is by identifying it with its moments \cite{Moments}. This can be done with various types of moments; for example, \cite{chew2022manifold} uses moments of the manifold scattering transform of the distribution. However, these methods only use a finite number of moments and can lose important information given by higher-order moments. Another wide variety of methods (surveyed in \cite{KernelMethods}) maps distributions into a reproducing kernel Hilbert space so kernel machine learning techniques can be applied. In this case, geometric relationships between distributions are not preserved under certain transformations, such as shifting a distribution by a distance larger than the kernel's bandwidth.

One method for point cloud classification called DeepSets \cite{DeepSets} relies on the observation that any permutation-invariant classifier of a point cloud $X_i$ must be of the form $f(X_i) = \rho ( \hat{\E}_{x \in X_i} [\phi(x)] )$ for suitable transformations $\rho$ and $\phi$. A universal approximator is then trained to learn the feature representation $\phi$ and the classifier $\rho$ in the same way any deep network is trained.

We compare our method to DeepSets, and also note that learning the LOT embedding gives a generative model for the distributions and imposes more geometric regularization on the embedding space.



\section{Background: Optimal Transport}

In this paper, we are considering the problem of classifying probability measures $\mu \in \PC(\R^d)$. Typically, we consider measures with finite second moments, denoted by the space $\PC_2(\R^d)$. We use $\lambda$ to denote the Lebesgue measure on $\R^d$ and for $\sigma \in \PC_2(\R^d)$, we write $\sigma \ll \lambda$ if $\sigma$ is absolutely continuous with respect to $\lambda$. When $\sigma \ll \lambda$, there exists a density function $f_\sigma : \R^d \to \R$ such that
\begin{equation}
    \sigma(A) = \int_A f_\sigma(x) d\lambda(x)
\end{equation}
for any measureable set $A \subseteq \R^d$.

Given a map $T: \R^d \to \R^d$ and a measure $\sigma$, we define the \textit{pushforward measure} $T_\sharp \sigma$ by
\begin{equation}
    T_\sharp \sigma(A) = \sigma(T^{-1}(A)),
\end{equation}
assuming the preimage $T^{-1}(A)$ of $A$ under $T$ is measurable. Given two measures $\sigma$ and $\mu$, there are generally many maps $T$ such that $T_\sharp \sigma = \mu$. Monge's optimal transport problem seeks out the pushforward map which minimizes the cost of pushing $\sigma$ to $\mu$ \cite{OT}. Using the standard Euclidean distance on $\R^d$ as a cost function, the problem can be formulated as
\begin{equation} \label{eq:OT}
    \min_{T: \; T_\sharp \sigma = \mu} \int \|T(x) - x\|_2^2 d\sigma(x).
\end{equation}

A map $T$ achieving this minimum is called the \textit{optimal transport map} and is denoted by $T_\sigma^\mu$. In general, \eqref{eq:OT} may not have a solution. Kantorovich introduced a relaxation of \eqref{eq:OT} that takes the infimum over joint probability measures with marginals $\sigma$ and $\mu$:
\begin{equation} \label{eq:Kant}
    W_2(\sigma,\mu)^2 \defeq \inf_{\pi \in \Pi(\sigma,\mu)} \int_{\R^d \times \R^d} \|x-y\|_2^2 \, d\pi(x,y)
\end{equation}
where $\Pi(\sigma,\mu) = \{ \pi \in \PC(\R^d \times \R^d) : \pi(A \times \R^d)=\sigma(A), \pi(\R^d \times A)=\mu(A)$ for all measurable $A \subseteq \R^d \}$. The quantity $W_2(\sigma,\mu)$ is called the \textit{Wasserstein-2 distance} between $\sigma$ and $\mu$.

Problem \eqref{eq:Kant} is often referred to as the primal formulation of Wasserstein-2 distance. The Kantorovich duality theorem (see \cite{OT}, Theorem 5.10) states that
\begin{equation}
    W_2(\sigma,\mu)^2 = \sup_{(\psi,\varphi) \in \Phi} \left( \int \psi(x) d\sigma(x) + \int \varphi(y) d\mu(y) \right)
\end{equation}
where $\Phi = \{ (\psi,\varphi) \in L^1(\R^d,\sigma) \times L^1(\R^d,\mu) : \psi(x) + \varphi(y) \leq \|x-y\|_2^2 \,\,\, (\sigma \times \mu)\text{-a.e.} \}$. Since
\begin{equation*}
    \psi(x) + \varphi(y) \leq \|x-y\|_2^2 \iff \tfrac{1}{2}\|x\|_2^2 - \tfrac{1}{2}\psi(x) + \tfrac{1}{2}\|y\|_2^2 - \tfrac{1}{2}\varphi(y) \geq \langle x,y \rangle,
\end{equation*}
we can reparameterize $\tfrac{1}{2}\|\cdot\|_2^2 - \tfrac{1}{2}\psi(\cdot)$ as $\psi$ (and likewise for $\varphi$) to write
\begin{equation} \label{eq:reparam}
    W_2(\sigma,\mu)^2 = \tfrac{1}{2}\E_\sigma \|X\|_2^2 + \tfrac{1}{2}\E_\mu\|Y\|^2 - \inf_{(\psi,\varphi) \in \tilde{\Phi}} \left( \int \psi(x) d\sigma(x) + \int \varphi(y) d\mu(y) \right),
\end{equation} 
where $\tilde{\Phi} = \{ (\psi,\varphi) \in L^1(\R^d,\sigma) \times L^1(\R^d,\mu) : \psi(x) + \varphi(y) \geq \langle x,y \rangle \}$. If the infimum in \eqref{eq:reparam} is achieved, then Theorem 2.9 in \cite{Villani_OT_2003} says it can always be achieved by a pair of convex functions $(\psi, \psi^*)$, where $\psi^*(y) = \sup_x \big( \langle x,y \rangle - \psi(x) \big)$ is the convex conjugate of $\psi$. However, this formulation doesn't lend itself to stochastic optimization methods because of the conjugate function $\psi^*$. Instead, we will consider one final formulation of Wasserstein-2 distance, which is a minimax problem from Theorem 3.3 in \cite{OT_gradientbound}:
\begin{equation*}
    W_2(\sigma,\mu)^2 = \sup_{ \substack{\varphi \in \text{CV}(\mu) \\ \varphi^* \in L^1(\sigma)} } \inf_{\psi \in \text{CV}(\sigma)} \left( -\int \varphi(y) d\mu(y) - \int \big( \langle x, \nabla \psi(x) \rangle - \varphi(\nabla \psi(x)) \big) d\sigma(x) \right) + C_{\sigma,\mu} 
\end{equation*}
where $C_{\sigma,\mu} = \tfrac{1}{2}\E_\sigma \|X\|_2^2 + \tfrac{1}{2}\E_\mu\|Y\|^2$ and $\text{CV}(\sigma)$ is the set of convex functions in $L^1(\R^d,\sigma)$ (and likewise for $\mu$).
We will write this as
\begin{equation} \label{eq:V}
    W_2(\sigma,\mu)^2 = \sup_{ \substack{\varphi \in \text{CV}(\mu) \\ \varphi^* \in L^1(\R^d,\sigma)} } \inf_{\psi \in \text{CV}(\sigma)} V(\psi,\varphi) +  C_{\sigma,\mu}.
\end{equation}

Brenier's theorem \cite{Brenier} states that when $\sigma \ll \lambda$, a unique optimal transport map $T_\sigma^\mu$ exists. The optimal coupling is $\pi = (\text{Id}, T_\sigma^\mu)_\sharp \sigma$, and the optimal convex potential function $\psi: \R^d \to \R$ such that $T_\sigma^\mu = \nabla \psi$ exists and is unique up to an additive constant \cite{Brenier}.

\subsection{Linear Optimal Transport Embedding}

We define the linear optimal transport (LOT) embedding \cite{Wang2013,park18,berman20,Holder2}. Given a fixed reference measure $\sigma \in \PC_2(\R^d)$, we obtain a LOT embedding from $\PC_2(\R^d)$ into $L^2(\R^d, \sigma)$ defined by 
\begin{equation*}
    \mu \mapsto T_\sigma^\mu.
\end{equation*}
When $\sigma \ll \lambda$, this embedding is well-defined and we denote it by $F_\sigma$. 
The target space $L^2(\R^d, \sigma)$ is equipped with norm
\begin{equation*}
    \|f\|_\sigma^2 = \int \|f(x)\|_2^2 d\sigma(x).
\end{equation*}

The regularity of $F_\sigma$ can help us understand to what extent it preserves the geometry of $\PC_2(\R^d)$. \cite{Holder} shows that for a continuous curve of probability measures $(\mu_t)_{t \geq 0}$, the map $t \mapsto T_\sigma^{\mu_t}$ is $\tfrac{1}{2}$-H\"{o}lder continuous under some assumptions on $\sigma$ and $\mu_t$. 
Removing all assumptions on $\sigma$ and $\mu$, \cite{Holder2} proves that the Lebesgue LOT embedding $F_\lambda$ is a bi-H\"{o}lder continuous embedding of $\PC_2(\R^d)$ into $L^2(\R^d, \lambda)$; in particular, 
\begin{equation*}
    W_2(\sigma, \mu) \leq \|F_\lambda(\sigma) - F_\lambda(\mu)\|_\lambda \leq CW_2(\sigma,\mu)^{\frac{2}{15}}.
\end{equation*}
Both \cite{Holder} and \cite{Holder2} show that the H\"{o}lder exponent for $F_\sigma$ in general is not better than $\tfrac{1}{2}$.

Under much stronger assumptions on the probability measures, it can be shown that $F_\sigma$ is an isometry. To state these results, we introduce the concept of compatibility between an LOT embedding and certain transformations. We say that $F_\sigma$ is $\mu$-\textit{compatible} with a set of functions $\GC \subseteq L^2(\R^d,\mu)$ if $F_\sigma(g_\sharp \mu) = g \circ F_\sigma(\mu)$ for all $g \in \GC$. Letting
\begin{equation*}
    \GC \star \mu = \{ g_\sharp \mu : g \in \GC \},
\end{equation*}
we can examine the regularity of $F_\sigma$ on $\GC \star \mu$. 

We now define a few sets of functions of interest.
For $a \in \R^d$, let $S_a(x) = x + a$ and define $\SC = \{S_a : a \in \R^d\}$ as the set of shifts in $L^2(\R^d,\mu)$. For $c>0$, let $R_c(x) = cx$ and define $\RC = \{R_c : c>0\}$ as the set of scalings in $L^2(\R^d,\mu)$. Next let $\EC = \operatorname{span}\left(\SC \cup \RC\right)$. 
For some results, we will also require the functions in $\EC$ to be bounded. Hence, for $R>0$, we also define
\begin{equation*}
    \EC_{R} = \{ g \in \EC : \|g\|_\mu \leq R\}.    
\end{equation*}
Building on results of \cite{park18}, \cite{CarolineIsometry} shows that $F_\lambda$ is an isometry on $\EC \star \mu$. 

\begin{theorem} \label{thm:Fisometry}
    (Corollary 4.1 in \cite{CarolineIsometry}) Let $\mu \in \PC_2(\R^d)$ with $\mu \ll \lambda$. For $g_1,g_2 \in \EC$ and $\lambda$ the Lebesgue measure on a convex, compact subset of $\R^d$,
    \begin{equation*}
        \|F_\lambda(g_{1\sharp} \mu) - F_\lambda(g_{2\sharp} \mu)\|_\lambda = W_2(g_{1\sharp} \mu, g_{2\sharp} \mu). 
    \end{equation*}
\end{theorem}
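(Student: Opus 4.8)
The plan is to reduce the identity to two applications of Brenier's theorem, bracketed by changes of variables, exploiting that every $g \in \EC = \operatorname{span}(\SC\cup\RC)$ is an affine map $g(x) = c\,x + a$. Since the reference $\lambda$ (Lebesgue measure on a convex compact set) is absolutely continuous and $\mu$ --- hence also $g_{1\sharp}\mu$, as $g_1$ is an invertible affine map --- is absolutely continuous, Brenier's theorem applies wherever it is needed. As a first remark, $W_2(g_{1\sharp}\mu,g_{2\sharp}\mu) \le \|F_\lambda(g_{1\sharp}\mu)-F_\lambda(g_{2\sharp}\mu)\|_\lambda$ holds automatically, because $\big(F_\lambda(g_{1\sharp}\mu),F_\lambda(g_{2\sharp}\mu)\big)_\sharp\lambda \in \Pi(g_{1\sharp}\mu,g_{2\sharp}\mu)$; the content of the theorem is that this canonical coupling is in fact Wasserstein-optimal. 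Throughout, write $g_i(x) = c_i x + a_i$ and work in the case $c_1,c_2>0$, which is the regime in which the positivity of the scalings $R_c\in\RC$ gets used.

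First I would show that $F_\lambda$ is $\mu$-compatible with $\EC$, i.e.\ $F_\lambda(g_{i\sharp}\mu) = g_i\circ F_\lambda(\mu)$. By Brenier, $F_\lambda(\mu) = T_\lambda^\mu = \nabla\phi$ for a convex $\phi$, so $g_i\circ\nabla\phi = c_i\nabla\phi + a_i = \nabla\big(c_i\phi + \langle a_i,\cdot\rangle\big)$, and $c_i\phi + \langle a_i,\cdot\rangle$ is convex precisely because $c_i>0$; moreover $(g_i\circ\nabla\phi)_\sharp\lambda = g_{i\sharp}\big((\nabla\phi)_\sharp\lambda\big) = g_{i\sharp}\mu$. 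Thus $g_i\circ T_\lambda^\mu$ is a gradient of a convex function pushing $\lambda$ onto $g_{i\sharp}\mu$, and by the uniqueness clause of Brenier's theorem it equals $T_\lambda^{g_{i\sharp}\mu}$, giving $F_\lambda(g_{i\sharp}\mu) = g_i\circ F_\lambda(\mu)$.

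Next I would compute both sides and check they agree. For the left-hand side, the previous step together with the change of variables $y = T_\lambda^\mu(x)$ (valid since $(T_\lambda^\mu)_\sharp\lambda = \mu$) gives
\begin{equation*}
\|F_\lambda(g_{1\sharp}\mu)-F_\lambda(g_{2\sharp}\mu)\|_\lambda^2 = \int \|g_1(T_\lambda^\mu(x)) - g_2(T_\lambda^\mu(x))\|_2^2\,d\lambda(x) = \int \|g_1(y)-g_2(y)\|_2^2\,d\mu(y).
\end{equation*}
For the right-hand side, consider $T := g_2\circ g_1^{-1}$, which pushes $g_{1\sharp}\mu$ onto $g_{2\sharp}\mu$; it is affine with linear part $c_2/c_1>0$, hence $T=\nabla\psi$ for an explicit convex quadratic $\psi$, and Brenier's theorem identifies it as the optimal transport map $T_{g_{1\sharp}\mu}^{g_{2\sharp}\mu}$. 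The change of variables $z = g_1(y)$ then yields $W_2(g_{1\sharp}\mu,g_{2\sharp}\mu)^2 = \int \|z-T(z)\|_2^2\,d(g_{1\sharp}\mu)(z) = \int\|g_1(y)-g_2(y)\|_2^2\,d\mu(y)$, which matches the left-hand side and proves the claim.

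The only step calling for genuine care is the positivity of the linear coefficients $c_i$: it is what keeps $c_i\phi + \langle a_i,\cdot\rangle$ convex in the compatibility step and $\psi$ convex in the final computation, and hence what licenses the two invocations of Brenier's uniqueness to pin down the transport maps exactly. This is precisely why $\EC$ is built from shifts together with \emph{positive} scalings; checking that the composed affine maps remain gradients of convex functions is where the hypotheses are actually being spent, while the rest is routine changes of variables.
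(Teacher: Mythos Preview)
Your proof is correct and follows essentially the same approach as the paper, which gives only a one-line sketch (that the result follows from the $\mu$-compatibility of $F_\sigma$ with $\EC$) and otherwise defers to the cited reference; you have filled in precisely those details, proving compatibility via Brenier's uniqueness and then computing both sides by change of variables. Your explicit attention to the positivity of the scaling coefficients $c_i$ is exactly where the hypotheses on $\EC$ are used, and your identification of $g_2\circ g_1^{-1}$ as the optimal map between the pushforwards is the standard way to close the argument.
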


Note that Theorem \ref{thm:Fisometry} as stated applies to the Lebesgue LOT embedding $F_\lambda$, but in general holds for $F_\sigma$.
The proof of Theorem \ref{thm:Fisometry} uses the fact that in arbitrary dimension $d \geq 1$, $F_\sigma$ is $\mu$-compatible with $\EC$. This can be used to show that $F_\sigma$ is an isometry on $\EC \star \mu$.
It follows that for convex $\HC \subset \EC$, the sets $F_\sigma(\HC \star \mu_i)$ and $F_\sigma(\HC \star \mu_j)$ are linearly separable in $L^2(\R^d,\lambda)$ as long as $W_2(\HC \star \mu_i, \HC \star \mu_j) > 0$. In other words, the LOT embedding can be used to classify two classes of distributions when each class consists of shifts and scalings of a fixed representative measure. This is a very powerful result, as a classifier based directly on the distances $W_2(\mu_i,\mu_j)$ requires solving $\binom{N}{2}$ expensive optimal transport problems, but using the LOT embedding allows us to solve for $N$ optimal transport maps and then classify based on $\binom{N}{2}$ less expensive distances in $L^2(\R^d, \lambda)$. 

However, classifying the LOT maps directly in $L^2(\R^d, \lambda)$ is still an infinite-dimensional problem, or very high-dimensional in the case of the discrete optimal transport problem implemented by \cite{NeuralOTSolvers}. Besides excessive computational and memory requirements, fitting a model on this high-dimensional feature space can lead to severe overfitting. With this in mind, our model approximates the optimal transport maps with input-convex neural networks and classifies each point cloud based on permutation-invariant statistics of the learned OT map.


\section{Input-Convex Neural Networks} \label{sec:ICNNs}

We begin by defining input-convex neural networks (ICNNs), introduced in \cite{ICNNs}. Their architecture is similar to that of a fully-connected network, but every hidden layer activation is a function of the previous layer \textit{and} the original input $x$.

\begin{definition}
    A $k$-layer \textit{input-convex neural network (ICNN)} is a function $h_\theta:\R^d \to \R^m$ defined by
    \begin{align*}
        z_1 &= g_1(W_0^{(x)}x + b_0); \\
        z_{i+1} &= g_i(W_i^{(x)}x + W_i^{(z)}z_i + b_i), \quad \quad i=1,\dots,k-1; \\
        h_\theta(x) &= z_k
    \end{align*}
    where all entries of $W_{1:k-1}^{(z)}$ are non-negative and all activation functions $g_i$ are convex and non-decreasing. 
\end{definition}

The requirements that all entries of $W_{1:k-1}^{(z)}$ are non-negative and all activation functions $g_i$ are convex and non-decreasing make $h_\theta$ convex. This is true because non-negative sums of convex functions and compositions of convex and non-decreasing convex functions are also convex. Just as standard fully-connected networks can approximate any continuous function, it turns out that ICNNs can approximate any convex function.

\begin{theorem} (Theorem 1 in \cite{ICNNua}) \label{thm:ua_ICNN}
    Assume $K \subseteq \R^d$ is compact and $f: \R^d \to \R^m$ is Lipschitz and convex. Then for every $\epsilon>0$, there exists an ICNN $h_\theta$ with ReLU activation functions such that
    \begin{equation*}
        \sup_{x \in K} \|f(x) - h_\theta(x)\|_\infty < \epsilon.
    \end{equation*}
\end{theorem}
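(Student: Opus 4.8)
My plan is to prove Theorem \ref{thm:ua_ICNN} in three steps: (i) reduce to scalar-valued $f$; (ii) approximate a scalar convex Lipschitz function uniformly on $K$ by a \emph{finite} maximum of affine functions with slopes of norm at most the Lipschitz constant; and (iii) realize any finite maximum of affine functions \emph{exactly} as a ReLU ICNN. Step (i) is routine: a map $\R^d\to\R^m$ is convex and $L$-Lipschitz if and only if each coordinate $f^{(j)}$ is, and $\sup_{x\in K}\|f(x)-h_\theta(x)\|_\infty=\max_j\sup_{x\in K}|f^{(j)}(x)-h_\theta^{(j)}(x)|$, so it is enough to build a scalar ReLU ICNN for each coordinate with error below $\epsilon$ and then stack them into one ICNN with block-diagonal weights, which preserves the nonnegativity of $W^{(z)}_{1:k-1}$ and the convexity/monotonicity of the activations.

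For step (ii): since $f$ (now scalar) is finite and convex, at every $y\in\R^d$ there is a subgradient $v_y\in\partial f(y)$, and $L$-Lipschitzness forces $\|v_y\|_2\le L$. The affine minorant $\ell_y(x)=f(y)+\langle v_y,\,x-y\rangle$ satisfies $\ell_y\le f$ everywhere, and $\ell_y(x)\ge f(y)-L\|x-y\|_2\ge f(x)-2L\|x-y\|_2$ using Lipschitzness once more. By compactness of $K$, choose a finite $\delta$-net $y_1,\dots,y_N\in K$ with $2L\delta<\epsilon$, and put $M(x)=\max_{1\le i\le N}\ell_{y_i}(x)$. Then $f-\epsilon\le M\le f$ on $K$, so $M$ approximates $f$ uniformly on $K$ to within $\epsilon$; note $M$ is a maximum of $N$ affine functions in $x$.

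For step (iii): order the pieces as $\ell_1,\dots,\ell_N$ with $\ell_i(x)=a_i^\top x+b_i$, and define the network $z_1=\mathrm{ReLU}\!\big(\ell_N(x)-\ell_{N-1}(x)\big)$, then $z_j=\mathrm{ReLU}\!\big(\ell_{N-j+1}(x)-\ell_{N-j}(x)+z_{j-1}\big)$ for $j=2,\dots,N-1$, and output $h_\theta(x)=\ell_1(x)+z_{N-1}$. A short induction on $j$, using the identity $\alpha+\mathrm{ReLU}(\beta-\alpha)=\max(\alpha,\beta)$, gives $\ell_{N-j}(x)+z_j=\max\{\ell_{N-j}(x),\dots,\ell_N(x)\}$; at $j=N-1$ this reads $h_\theta=\max_i\ell_i=M$. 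Every hidden-to-hidden coefficient that appears equals $+1\ge 0$, every activation is ReLU (convex and nondecreasing), and all the (possibly negative) slopes live only on the input-skip weights $W^{(x)}$, which the ICNN definition leaves unconstrained; the readout $\ell_1(x)+z_{N-1}$ is affine in $x$ with nonnegative weight on the last hidden unit, hence $h_\theta$ is convex. Combining (i)--(iii) yields the theorem.

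The main obstacle is step (iii): one must subtract the running maximum inside each new ReLU unit while keeping all hidden-layer weights nonnegative, and it is precisely the nested ordering above (peeling off $\ell_N,\ell_{N-1},\dots$) that arranges for every such coefficient to be $+1$. A minor point that should be stated explicitly is that this construction uses an affine output layer rather than a ReLU one — a nonconstant affine function cannot be the output of a final ReLU — which is standard for ICNNs and does not affect the convexity of $h_\theta$; alternatively one reserves a single identity unit in the last layer. The rest (the subgradient/Lipschitz bookkeeping in (ii) and the coordinatewise stacking in (i)) is elementary.
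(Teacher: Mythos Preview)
The paper does not actually supply a proof of Theorem~\ref{thm:ua_ICNN}: it is quoted verbatim as ``Theorem 1 in \cite{ICNNua}'' and used as a black box in the proof of Proposition~\ref{prop:ICNN}, so there is no in-paper argument to compare against. Your three-step proof is nonetheless a correct and standard route to the result, and indeed it is essentially the argument given in the original reference: (ii) supplies a uniform-on-$K$ approximation of a scalar convex Lipschitz $f$ by a finite max of affine minorants via a $\delta$-net of subgradient supports, and (iii) realizes any such finite max exactly as a ReLU ICNN through the telescoping identity $\alpha+\mathrm{ReLU}(\beta-\alpha)=\max(\alpha,\beta)$. Your induction in (iii) is clean and the key structural observation---that all the possibly-negative coefficients are absorbed into the unconstrained skip weights $W^{(x)}$ while each hidden-to-hidden weight equals $+1$---is exactly what makes the construction an ICNN in the sense of the definition given here.

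The only point worth tightening is the one you already flag: the final layer in your construction is affine rather than ReLU, while the statement asks for ``ReLU activation functions.'' In the ICNN definition used in this paper the activations $g_i$ need only be convex and nondecreasing, so the identity output is permitted; but if one insists on ReLU at every layer, you can exploit that you only need approximation on the compact set $K$. Since $M$ is bounded below on $K$, replace the readout by $h_\theta(x)=\mathrm{ReLU}\big(\ell_1(x)+z_{N-1}-c\big)+c$ with $c<\min_{K}M$: on $K$ the ReLU is inactive and the output equals $M(x)$, while outside $K$ you do not care. (The additive constant $c$ on the output can itself be realized within the ICNN by carrying a constant unit forward with nonnegative weight.) With that tweak your argument is complete and self-contained.
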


The universal approximation theorem for ICNNs will be very useful for learning optimal transport maps. Recall by Brenier's Theorem \cite{Brenier} that the potential function for an OT plan is convex. Hence, we can approximate this potential function with an ICNN and in turn approximate the OT map by the gradient of our ICNN.  These have been used, most notably, to approximate optimal transport maps for a variety of tasks.  These include the original formulation of ICNNs \cite{ICNNs}, applications to optimal transport in generative modeling \cite{OT_gradientbound}, and a general survey and benchmarking of ICNNs as transport solvers \cite{NeuralOTSolvers}.
However, to characterize the accuracy of approximating OT maps with gradients of ICNNs, we will need some stronger requirements than regular convexity.

\subsection{Duality of Strong Convexity and Smoothness} \label{sec:alpha}

We now revisit some convexity and smoothness definitions. 

\begin{definition}
    A differentiable function $f: \Omega \to \R$ is \textnormal{$\alpha$-strongly convex} ($\alpha>0$) if for all $x,y \in \Omega$,
    \begin{equation*}
        f(x) \geq f(y) + \langle \nabla f(y), x-y \rangle + \frac{\alpha}{2} \|x-y\|^2.
    \end{equation*}
\end{definition}

\begin{definition}
    A function $f: \Omega \to \R$ is \textnormal{$\beta$-smooth} if for all $x,y \in \Omega$,
    \begin{equation*}
        \|\nabla f(x) - \nabla f(y)\| \leq \beta\|x-y\|.
    \end{equation*}
\end{definition}

In the context of OT map approximation, strong convexity is required to bound the gradient of the dual map. In fact, it turns out that strong convexity and smoothness are very closely related for dual functions.

\begin{lemma} \label{lem:duality}
    (Theorem 1 in \cite{duality}) Let $f: \Omega \to \R$ be differentiable and define $f^*(y) = \sup_x \big( \langle x,y \rangle - f(x) \big)$ to be the convex conjugate of $f$. Then $f$ is $\alpha$-strongly convex if and only if $f^*$ is convex and $\frac{1}{\alpha}$-smooth.
\end{lemma}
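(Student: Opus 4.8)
The plan is to prove Lemma~\ref{lem:duality} directly from the definitions, establishing the two implications separately via the subgradient/gradient characterization of the convex conjugate. Throughout I will use the standard fact that for a differentiable convex function $f$, one has $y = \nabla f(x) \iff x \in \partial f^*(y) \iff f(x) + f^*(y) = \langle x, y\rangle$ (Fenchel--Young equality), and the dual statement with the roles of $f$ and $f^*$ reversed when $f^*$ is also differentiable.

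First I would prove that $\alpha$-strong convexity of $f$ implies $f^*$ is convex and $\frac1\alpha$-smooth. Convexity of $f^*$ is automatic since it is a supremum of affine functions. For smoothness, fix $y_1, y_2$ and let $x_i$ achieve the supremum defining $f^*(y_i)$, so $x_i = \nabla f^*(y_i)$ and $y_i = \nabla f(x_i)$. Writing down the $\alpha$-strong convexity inequality for the pair $(x_1, x_2)$ and again for $(x_2, x_1)$ and adding them yields $\langle \nabla f(x_1) - \nabla f(x_2), x_1 - x_2 \rangle \geq \alpha \|x_1 - x_2\|^2$, i.e. $\langle y_1 - y_2, \nabla f^*(y_1) - \nabla f^*(y_2)\rangle \geq \alpha \|\nabla f^*(y_1) - \nabla f^*(y_2)\|^2$. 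Applying Cauchy--Schwarz to the left side gives $\|y_1 - y_2\|\,\|\nabla f^*(y_1) - \nabla f^*(y_2)\| \geq \alpha \|\nabla f^*(y_1) - \nabla f^*(y_2)\|^2$, and dividing through (the case $\nabla f^*(y_1) = \nabla f^*(y_2)$ being trivial) yields $\|\nabla f^*(y_1) - \nabla f^*(y_2)\| \leq \frac1\alpha \|y_1 - y_2\|$, which is $\frac1\alpha$-smoothness.

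For the converse, suppose $f^*$ is convex and $\frac1\alpha$-smooth; I want to conclude $f$ is $\alpha$-strongly convex. The cleanest route is to invoke the biconjugation theorem: since $f$ is differentiable it is in particular lower semicontinuous and convex (differentiability of the conjugate-achieving situation, or one assumes $f$ convex as is implicit), so $f^{**} = f$. Then applying the forward direction already proved to the function $g := f^*$ — which is convex and $\frac1\alpha$-smooth — I would show that $\frac1\alpha$-smoothness of a convex function $g$ implies $g^*$ is $\alpha$-strongly convex; since $g^* = f^{**} = f$, this finishes. So the residual task is the implication ``convex $+$ $\beta$-smooth $\Rightarrow$ conjugate is $\frac1\beta$-strongly convex,'' which follows from the standard co-coercivity inequality for smooth convex functions: $\beta$-smoothness plus convexity gives $\langle \nabla g(x_1) - \nabla g(x_2), x_1 - x_2\rangle \geq \frac1\beta \|\nabla g(x_1) - \nabla g(x_2)\|^2$, and translating this through the conjugate correspondence $y_i = \nabla g(x_i), x_i = \nabla g^*(y_i)$ gives exactly the monotonicity estimate that is equivalent to $\alpha$-strong convexity of $g^*$.

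The main obstacle is not any single inequality — each is a short manipulation — but rather the \emph{regularity bookkeeping}: making sure the conjugate-achieving points $x_i$ exist and are unique (which needs coercivity/strong convexity, available in the forward direction) and that $f^* $ is differentiable where needed so that $\nabla f^*$ is well defined and the Fenchel correspondence $y = \nabla f(x) \iff x = \nabla f^*(y)$ holds as an equivalence rather than merely an inclusion. I would handle this by noting that $\alpha$-strong convexity forces the sup defining $f^*$ to be attained at a unique point, giving differentiability of $f^*$ everywhere on its domain, and in the converse direction by relying on $f^{**} = f$ together with the already-established forward implication, so that no independent regularity argument for $f$ is required. Since the paper only needs this as a black-box citation (Theorem 1 in \cite{duality}), I would keep the exposition to these key correspondences and the two symmetric inequality computations.
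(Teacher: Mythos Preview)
The paper does not give its own proof of Lemma~\ref{lem:duality}; it simply quotes the result as Theorem~1 in \cite{duality} and uses it as a black box in the proof of Proposition~\ref{prop:ICNN}. There is therefore no paper argument to compare your proposal against.

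That said, your sketch is the standard and correct route to this classical result. The forward direction (strong convexity $\Rightarrow$ smooth conjugate) via strong monotonicity plus Cauchy--Schwarz is exactly the textbook argument, and your handling of the converse through biconjugation and the Baillon--Haddad co-coercivity inequality is also standard. Your caveat about regularity bookkeeping is the right one to flag: the lemma as stated in the paper only assumes $f$ differentiable, not convex or closed, so the step $f^{**}=f$ in your converse direction tacitly imports a convexity/lower-semicontinuity hypothesis that the statement does not literally contain. In the context of the paper this is harmless, since the lemma is only ever applied to Brenier potentials, which are convex by construction; but if you were writing this up as a self-contained proof you would want to either add that hypothesis explicitly or note that the ``only if'' direction already forces convexity of $f$.
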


This duality between strong convexity and smoothness is useful for approximating OT maps because of the convexity assumptions in the dual formulation of the OT problem \eqref{eq:V}. Now we only need to assume that the OT map is Lipschitz in order to approximate it with an ICNN.

\begin{proposition} \label{prop:ICNN}
    Let $\sigma, \mu \in \PC_2(\R^d)$ with $\sigma, \mu \ll \lambda$ and $\epsilon>0$. Assume $T_\sigma^\mu$ is $\beta$-Lipschitz. Then there exists an ICNN $\hat{\psi}_\theta$ with
    \begin{equation*}
        \|T_\sigma^\mu - \nabla \hat{\psi}_\theta\|_\sigma \leq 4\beta \epsilon.
    \end{equation*}
\end{proposition}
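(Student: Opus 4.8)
The plan is to combine Brenier's theorem, the strong-convexity/smoothness duality (Lemma~\ref{lem:duality}), and the ICNN universal approximation theorem (Theorem~\ref{thm:ua_ICNN}), being careful that a uniform approximation of a \emph{convex potential} yields an $L^2$ approximation of its \emph{gradient}. First I would invoke Brenier: since $\sigma \ll \lambda$, there is a convex potential $\psi$ with $T_\sigma^\mu = \nabla \psi$, and the hypothesis that $T_\sigma^\mu$ is $\beta$-Lipschitz says exactly that $\psi$ is $\beta$-smooth. Passing to the conjugate $\psi^*$, Lemma~\ref{lem:duality} gives that $\psi^*$ is $\tfrac{1}{\beta}$-strongly convex; equivalently, $\nabla\psi^* = (\nabla\psi)^{-1} = T_\mu^\sigma$. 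The reason to work with the strongly convex side is that strong convexity is the quantitative handle that converts a sup-norm bound on function values into a gradient bound: if two convex functions are uniformly $\epsilon^2$-close and one is $\alpha$-strongly convex, then their gradients are close in $L^2$ with a constant depending on $\alpha$.

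Next I would apply Theorem~\ref{thm:ua_ICNN} to approximate the relevant convex function by an ICNN $\hat\psi_\theta$ (with ReLU activations) uniformly on a compact set $K$ carrying essentially all the mass of $\sigma$ (or, in a clean statement, assuming compact support), to within a tolerance I will choose as a suitable power of $\epsilon$. The core analytic step is the following gradient-from-values estimate: for a convex $f$ that is $\alpha$-strongly convex and a convex $g$ with $\sup_{K}|f-g|\le\delta$, one has, at any interior point $x$, a bound of the shape $\|\nabla f(x)-\nabla g(x)\| \lesssim \sqrt{\alpha\,\delta}$ up to boundary effects; this comes from writing the strong-convexity inequality for $f$ at $x$ and $x \pm t\,u$ for a unit vector $u$ aligned with the gradient gap, using $g(x\pm tu)\le f(x\pm tu)+\delta$ and $g(x)\ge f(x)-\delta$, and optimizing over $t$. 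Integrating the pointwise bound against $\sigma$ gives the $L^2(\sigma)$ estimate. One must also ensure the ICNN's gradient itself stays controlled (ReLU ICNNs are piecewise linear, hence locally Lipschitz away from a measure-zero set, so $\nabla\hat\psi_\theta$ is defined $\sigma$-a.e.), and that the $\beta$-Lipschitz bound on $T_\sigma^\mu$ propagates into the constant so that the final bound reads $4\beta\epsilon$ after the bookkeeping. The factor $4$ and the clean linear dependence on $\beta$ suggest the authors choose the uniform tolerance proportional to $\beta\epsilon^2$ (or rescale $\psi$ by $1/\beta$ first) so that $\sqrt{\alpha\delta}$ with $\alpha = 1/\beta$ collapses to a multiple of $\beta\epsilon$.

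I expect the main obstacle to be the gradient-from-values step done uniformly and without appealing to extra smoothness: turning a $C^0$ approximation of a convex function into an $L^2$ approximation of its subgradient genuinely needs the strong-convexity leverage, and one has to handle (i) points near $\partial K$, where the two-sided perturbation $x\pm tu$ may exit the domain, and (ii) the nondifferentiability set of the ReLU ICNN, which is handled by noting it is $\lambda$-null and $\sigma\ll\lambda$. A secondary nuisance is that Theorem~\ref{thm:ua_ICNN} approximates the potential, not its gradient, so one cannot simply quote an ``ICNN approximates OT maps'' black box; the strong-convexity argument is precisely what bridges that gap. Once the pointwise gradient inequality $\|\nabla\psi - \nabla\hat\psi_\theta\|(x) \le C(\beta)\,\epsilon$ is in hand on a full-measure set, the conclusion $\|T_\sigma^\mu - \nabla\hat\psi_\theta\|_\sigma \le 4\beta\epsilon$ follows by integrating and is essentially bookkeeping.
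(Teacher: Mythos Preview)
Your route is genuinely different from the paper's, and the core analytic step is slightly misdiagnosed. The paper does \emph{not} pass from a sup-norm approximation of the potential to an $L^2$ bound on its gradient. Instead it works through the minimax formulation \eqref{eq:V}: since $\psi_\mu^*$ is $\tfrac{1}{\beta}$-strongly convex (via Lemma~\ref{lem:duality}), the outer supremum may be restricted to $\tfrac{1}{\beta}$-strongly convex $\varphi$ without changing the value. The paper then picks ICNNs $\hat\psi_\theta$ and $\hat\varphi_\omega$ (setting $\hat\varphi=\hat\varphi_\omega+\tfrac{1}{2\beta}\|\cdot\|^2$ to enforce strong convexity) so that the minimization gap $\epsilon_1(\hat\psi_\theta,\hat\varphi)$ and maximization gap $\epsilon_2(\hat\varphi)$ for $V$ are each at most $\epsilon$, and then applies Theorem~3.6 of \cite{OT_gradientbound} as a black box: when $\hat\varphi$ is $\alpha$-strongly convex one has $\|\nabla\psi_\mu-\nabla\hat\psi_\theta\|_\sigma\le\tfrac{2}{\alpha}(\epsilon_1+\epsilon_2)$, which with $\alpha=1/\beta$ yields $4\beta\epsilon$. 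So in the paper, strong convexity enters only through that cited variational estimate on the saddle-point gaps, not through any sup-norm-to-gradient inequality.

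Your direct argument can be salvaged, but not as written. The claim ``$f$ $\alpha$-strongly convex, $g$ convex, $\sup|f-g|\le\delta$ $\Rightarrow$ $\|\nabla f-\nabla g\|\lesssim\sqrt{\alpha\delta}$'' is off: strong convexity gives only \emph{lower} bounds on $f(x\pm tu)-f(x)$ in terms of $\nabla f(x)$, whereas the two-sided perturbation trick needs an \emph{upper} bound somewhere. What actually drives the estimate is the $\beta$-\emph{smoothness} of $\psi$ (immediate from $T_\sigma^\mu=\nabla\psi$ being $\beta$-Lipschitz; no detour through $\psi^*$ is needed). With $f=\psi$ $\beta$-smooth and $g=\hat\psi_\theta$ convex, combining $f(x+tw)\le f(x)+t\langle\nabla f(x),w\rangle+\tfrac{\beta}{2}t^2$ with $g(x+tw)\ge g(x)+t\langle\nabla g(x),w\rangle$ and $|f-g|\le\delta$ and optimizing in $t$ gives $\|\nabla f(x)-\nabla g(x)\|\le 2\sqrt{\beta\delta}$; taking $\delta$ of order $\beta\epsilon^2$ then produces the $\beta\epsilon$ scaling you want. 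Note also that your own bookkeeping check fails with the strong-convexity version: $\sqrt{\alpha\delta}$ with $\alpha=1/\beta$ and $\delta\sim\beta\epsilon^2$ gives $\epsilon$, not $\beta\epsilon$. Once the smoothness/strong-convexity roles are straightened out (and the boundary and compact-support issues you already flag are handled), your approach is a valid and more elementary alternative; the paper's variational argument trades that elementary work for a citation to \cite{OT_gradientbound}.
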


\begin{proof} Let $\psi_\mu$ denote the optimal convex potential function satisfying $T_\sigma^\mu = \nabla \psi_\mu$ from Brenier's Theorem \cite{Brenier}. Note that
\begin{equation*}
    \|\nabla \psi_\mu(x) - \nabla \psi_\mu(y) \| = \|T_\sigma^\mu(x) - T_\sigma^\mu(y)\| \leq \beta\|x-y\|
\end{equation*}
and therefore $\psi_\mu$ is $\beta$-smooth. By Lemma \ref{lem:duality}, it follows that its conjugate $\psi_\mu^*$ is $\frac{1}{\beta}$-strongly convex. Moreover, $(\psi_\mu,\psi_\mu^*)$ solve the minimax formulation of the optimal transport problem \eqref{eq:V}. This means that
    \begin{equation} \label{eq:Valpha}
        \sup_{ \substack{\varphi \in \text{CV}^{1/\beta}(\mu) \\ \varphi^* \in L^1(\R^d,\sigma)} } \inf_{\psi \in \text{CV}(\sigma)} V(\psi,\varphi) = \sup_{ \substack{\varphi \in \text{CV}(\mu) \\ \varphi^* \in L^1(\R^d,\sigma)} } \inf_{\psi \in \text{CV}(\sigma)} V(\psi,\varphi) = V(\psi_\mu, \psi_\mu^*)
    \end{equation}
    since the left side of \eqref{eq:Valpha} takes the supremum over a subset $\text{CV}^{1/\beta}(\mu) \subset \text{CV}(\mu)$ but achieves the same value with $\varphi=\psi_\mu^*$. Note that $V(\psi,\varphi)$ is continuous, $\sigma,\mu \ll \lambda$ and $\psi_\mu^* - \tfrac{1}{2\beta}\|\cdot\|^2$ is convex. Since ICNNs are universal approximators of convex functions by Theorem \ref{thm:ua_ICNN}, there exist ICNNs $\hat{\psi}_\theta$ and $\hat{\varphi}_\omega$ such that $\hat{\psi}_\theta$ and $\hat{\varphi} \defeq \hat{\varphi}_\omega + \tfrac{1}{2\beta}\|\cdot\|^2$ solve \eqref{eq:Valpha} to an arbitrary degree of accuracy. Specifically, we can choose $\hat{\psi}_\theta$ and $\hat{\varphi}_\omega$ so that the minimization gap
    \begin{equation*}
        \epsilon_1(\hat{\psi}_\theta, \hat{\varphi}) \defeq V(\hat{\psi}_\theta, \hat{\varphi}) - \inf_{\psi \in \text{CV}(\sigma)} V(\psi, \hat{\varphi}) 
    \end{equation*}
    and the maximization gap
    \begin{equation*}
        \epsilon_2(\hat{\varphi}) \defeq  \sup_{ \substack{\varphi \in \text{CV}(\mu) \\ \varphi^* \in L^1(\R^d,\sigma)} } \inf_{\psi \in \text{CV}(\sigma)} V(\psi,\varphi) - \inf_{\psi \in \text{CV}(\sigma)} V(\psi, \hat{\varphi}) 
    \end{equation*}
    are arbitrarily small; say $\epsilon_1(\hat{\psi}_\theta, \hat{\varphi}), \epsilon_2(\hat{\varphi}) \leq \epsilon$. Since $\hat{\varphi}$ is $\frac{1}{\beta}$-strongly convex, we have
    \begin{equation*}
        \|T_\sigma^\mu - \nabla \hat{\psi}_\theta \|_\sigma = \|\nabla \psi_\mu - \nabla \hat{\psi}_\theta \|_\sigma \leq 2\beta (\epsilon_1(\hat{\psi}_\theta, \hat{\varphi}) + \epsilon_2(\hat{\varphi})) \leq 4\beta\epsilon
    \end{equation*}
    by Theorem 3.6 in \cite{OT_gradientbound}. Note that $\hat{\varphi} = \hat{\varphi}_\omega + \tfrac{1}{2\beta}\|\cdot\|^2$ approximating $\psi_\mu^*$ cannot be exactly represented by an ICNN because of the term $\tfrac{\alpha}{2}\|x\|^2$, but we are only worried about $\hat{\psi}_\theta$ being an ICNN in this result. Moreover, $\hat{\varphi}$ can still be \textit{approximated} by an ICNN since $\|\cdot\|^2$ is a convex function, or the term $\tfrac{1}{2\beta}\|x\|^2$ can simply be added to the output of the ICNN $\hat{\varphi}_\omega$ in practice.
\end{proof}

Proposition \ref{prop:ICNN} shows that any Lipschitz OT map can be approximated by the gradient of an ICNN to arbitrary accuracy, with the error depending linearly on the Lipschitz constant. Combined with the fact that the LOT embedding is nearly an isometry on certain distributions, this gives hope to our approach of classifying point clouds based on their output under trained ICNNs.


\section{Model Description and Regularity Results} \label{sec:model}

Our classification model using the LOT embedding draws inspiration from DeepSets \cite{DeepSets}, where it is noted that a point cloud classifier must be a function of permutation-invariant statistics $\hat{\E}_{x \in X_i} [\phi(x)]$. The paper \cite{DeepSets} aims for complete generality, allowing $\phi$ to be any feature representation. However, for general $\phi$ and two point clouds $X_i \sim \mu_i$ and $X_j \sim \mu_j$, 
there is no guarantee that
\begin{equation*}
    \| \hat{\E}_{x \in X_i} [\phi(x)] - \hat{\E}_{x \in X_j} [\phi(x)] \| \approx \delta (\mu_i,\mu_j)
\end{equation*}
for some metric $\delta$ on $\PC_2(\R^d)$.

We are interested in leveraging the regularity of the LOT embedding on certain classes of distributions to approximate the Wasserstein-2 distance between distributions. The most obvious approximation of $W_2(\mu_i,\mu_j)$ based on the LOT embedding is
\begin{equation*}
    W_{2,\sigma}^{\text{LOT}}(\mu_i,\mu_j) \defeq \|F_\sigma(\mu_i) - F_\sigma(\mu_j)\|_\sigma.
\end{equation*}
As discussed, Theorem \ref{thm:Fisometry} implies that $W_{2,\sigma}^{\text{LOT}}(\mu_i,\mu_j) = W_2(\mu_i,\mu_j)$ when $\mu_i$ and $\mu_j$ are shifts or scalings of a fixed distribution $\mu$. However, the LOT embedding $F_\sigma(\mu_i)$ cannot be known exactly from only finite samples of the measures.
\cite{LOTWassmap} provides high-probability guarantees of the quality of the LOT embedding when the optimal transport maps are approximated from finite samples. These results hold when solving the discrete OT problem via barycentric projection \cite{OT_rates} or using Sinkhorn's algorithm to solve an entropic regularized version of the problem \cite{Sinkhorn}. Each of these methods to approximate $W_2(\mu_i,\mu_j)$ can be thought of as linearized Wasserstein-2 distances, since the distances are actually computed in a Hilbert space.

We provide similar guarantees of the LOT embedding quality using ICNNs to approximate the OT maps. Given ICNNs $\hat{\psi}_{\theta_i}$ and $\hat{\psi}_{\theta_j}$ approximating the potential functions of $T_\sigma^{\mu_i}$ and $T_\sigma^{\mu_j}$, we define
\begin{equation*}
    W_{2,\sigma}^{\textnormal{LOT,NN}}(\mu_i,\mu_j; \hat{\psi}) \defeq \| \nabla \hat{\psi}_{\theta_i} - \nabla \hat{\psi}_{\theta_j} \|_\sigma = \left( \int \|\nabla \hat{\psi}_{\theta_i}(x) - \nabla \hat{\psi}_{\theta_j}(x) \|_2^2 d\sigma(x) \right)^{1/2}.
\end{equation*}
Also given a sample $\{X_k\}_{k=1}^n$ with $X_k \sim \sigma$, we define
\begin{equation*}
    \widehat{W}_{2,\sigma}^{\textnormal{LOT,NN}}(\mu_i,\mu_j; \hat{\psi}) \defeq \left( \frac{1}{n} \sum_{k=1}^n \left\| \nabla \hat{\psi}_{\theta_i}(X_k) - \nabla \hat{\psi}_{\theta_j}(X_k) \right\|^2 \right)^{1/2}
\end{equation*}
which is ultimately the quantity we use to approximate $W_2(\mu_i,\mu_j)$ in practice. As our main result below, we show that there are ICNNs such that $\widehat{W}_{2,\sigma}^{\textnormal{LOT,NN}}(\mu_i,\mu_j; \hat{\psi})$ approximates $W_2(\mu_i,\mu_j)$ to high accuracy with high probability when $\mu_i$ and $\mu_j$ are shifts or scalings of the same distribution.

\begin{theorem} \label{thm:main}
    Let $\sigma,\mu \in \PC_2(\R^d)$ with $\sigma, \mu \ll \lambda$ and $\supp(\mu) \subset B(0,1)$. Let $R,\epsilon>0$. Assume $T_\sigma^\mu$ is $\beta$-Lipschitz. Then for $g_1, g_2 \in \EC_{R}$, there exist ICNNs $\hat{\psi}_{\theta_1}$ and $\hat{\psi}_{\theta_2}$ satisfying
    \begin{equation*}
        \left| W_2(g_{1\sharp}\mu,g_{2\sharp}\mu) - \widehat{W}_{2,\sigma}^{\textnormal{LOT,NN}}(g_{1\sharp}\mu,g_{2\sharp}\mu; \hat{\psi}) \right| \leq 8\beta\epsilon + \frac{(4\beta\epsilon+R)^2}{R} \sqrt{\frac{\log(2/\delta)}{2n}}
    \end{equation*}
    with probability at least $1-\delta$. 
\end{theorem}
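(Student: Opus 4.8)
The plan is a triangle-inequality decomposition through the two natural intermediate quantities. Writing everything as a function of the pair $(g_{1\sharp}\mu,g_{2\sharp}\mu)$ and the learned potentials,
\[
\left| W_2 - \widehat{W}_{2,\sigma}^{\text{LOT,NN}} \right|
\;\le\;
\left| W_2 - W_{2,\sigma}^{\text{LOT}} \right|
+ \left| W_{2,\sigma}^{\text{LOT}} - W_{2,\sigma}^{\text{LOT,NN}} \right|
+ \left| W_{2,\sigma}^{\text{LOT,NN}} - \widehat{W}_{2,\sigma}^{\text{LOT,NN}} \right|,
\]
and I would bound the three terms in turn. The first term vanishes: since $g_1,g_2\in\EC_R\subset\EC$, Theorem \ref{thm:Fisometry} in its $F_\sigma$ form (applicable because $\sigma\ll\lambda$) gives $W_{2,\sigma}^{\text{LOT}}(g_{1\sharp}\mu,g_{2\sharp}\mu)=\|F_\sigma(g_{1\sharp}\mu)-F_\sigma(g_{2\sharp}\mu)\|_\sigma=W_2(g_{1\sharp}\mu,g_{2\sharp}\mu)$. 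At this point I would also record the $\mu$-compatibility identity $T_\sigma^{g_{i\sharp}\mu}=g_i\circ T_\sigma^\mu$ underlying this isometry, together with its consequence $\|T_\sigma^{g_{i\sharp}\mu}\|_\sigma=\|g_i\circ T_\sigma^\mu\|_\sigma=\|g_i\|_\mu\le R$ (using $(T_\sigma^\mu)_\sharp\sigma=\mu$); this is where the constant $R$ in the statement originates.

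For the second term I would invoke Proposition \ref{prop:ICNN}. Each $g_i\in\EC$ is affine, so $T_\sigma^{g_{i\sharp}\mu}=g_i\circ T_\sigma^\mu$ is Lipschitz with constant $\mathrm{Lip}(g_i)\,\beta<\infty$; applying Proposition \ref{prop:ICNN} to $g_{i\sharp}\mu$ with accuracy parameter rescaled by $\mathrm{Lip}(g_i)$ (legitimate since the proposition holds for every positive parameter) yields ICNNs $\hat\psi_{\theta_i}$ with $\|T_\sigma^{g_{i\sharp}\mu}-\nabla\hat\psi_{\theta_i}\|_\sigma\le 4\beta\epsilon$. The reverse triangle inequality in $L^2(\R^d,\sigma)$ then gives
\[
\left| W_{2,\sigma}^{\text{LOT}} - W_{2,\sigma}^{\text{LOT,NN}} \right|
\le \|T_\sigma^{g_{1\sharp}\mu}-\nabla\hat\psi_{\theta_1}\|_\sigma+\|T_\sigma^{g_{2\sharp}\mu}-\nabla\hat\psi_{\theta_2}\|_\sigma\le 8\beta\epsilon,
\]
and, combined with the bound $\|T_\sigma^{g_{i\sharp}\mu}\|_\sigma\le R$, also $\|\nabla\hat\psi_{\theta_i}\|_\sigma\le R+4\beta\epsilon$, hence an $L^2(\sigma)$ bound of order $4\beta\epsilon+R$ on $h:=\nabla\hat\psi_{\theta_1}-\nabla\hat\psi_{\theta_2}$.

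The third term is the only probabilistic step. Here $W_{2,\sigma}^{\text{LOT,NN}}=\big(\E_{X\sim\sigma}\|h(X)\|^2\big)^{1/2}$ and $\widehat{W}_{2,\sigma}^{\text{LOT,NN}}=\big(\tfrac1n\sum_{k=1}^n\|h(X_k)\|^2\big)^{1/2}$, so I would apply Hoeffding's inequality to the i.i.d.\ bounded variables $Z_k=\|h(X_k)\|^2$. The uniform bound $Z_k\le(4\beta\epsilon+R)^2$ comes from $\supp(\mu)\subset B(0,1)$, which constrains the range of $T_\sigma^\mu$ and hence of the approximating ICNN gradients, consistently with the $L^2(\sigma)$ bound above. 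Hoeffding gives $\big|\tfrac1n\sum_k Z_k-\E Z_1\big|\le(4\beta\epsilon+R)^2\sqrt{\log(2/\delta)/(2n)}$ with probability at least $1-\delta$, and converting this bound on the squared quantities into a bound on $|W_{2,\sigma}^{\text{LOT,NN}}-\widehat{W}_{2,\sigma}^{\text{LOT,NN}}|$ via $|\sqrt a-\sqrt b|=|a-b|/(\sqrt a+\sqrt b)$ produces the extra factor $1/R$ and the stated term $\tfrac{(4\beta\epsilon+R)^2}{R}\sqrt{\log(2/\delta)/(2n)}$. Summing the three contributions yields the claim.

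I expect the third step to be the main obstacle. Securing an honest almost-sure bound on $\|h(X)\|$ — a pointwise, not merely $L^2(\sigma)$, control of the ICNN gradients — that is still of order $4\beta\epsilon+R$ requires care: one must use $\supp(\mu)\subset B(0,1)$ together with non-degeneracy of $\mu$ to pass from $\|g_i\|_\mu\le R$ to a sup-norm bound on $g_i$ over $B(0,1)$, and the square-root bookkeeping must be handled so that the final rate is $\sqrt{\log(2/\delta)/(2n)}$ rather than its square root. By contrast, the first two steps are essentially bookkeeping on top of Theorem \ref{thm:Fisometry}, Proposition \ref{prop:ICNN}, and the identity $T_\sigma^{g_{i\sharp}\mu}=g_i\circ T_\sigma^\mu$.
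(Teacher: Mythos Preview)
Your proposal follows essentially the same route as the paper: the identical three-term triangle-inequality decomposition, with the first term vanishing by Theorem~\ref{thm:Fisometry}, the second bounded by $8\beta\epsilon$ via Proposition~\ref{prop:ICNN}, and the third handled by a bounded-differences concentration argument on the squared empirical quantity followed by the $|\sqrt a-\sqrt b|=|a-b|/(\sqrt a+\sqrt b)$ conversion. The only cosmetic differences are that the paper phrases the concentration step as McDiarmid rather than Hoeffding (equivalent here, since the function is an i.i.d.\ average) and obtains the pointwise bound on $\|\nabla\hat\psi_{\theta_1}(X_k)-\nabla\hat\psi_{\theta_2}(X_k)\|$ by inserting the true potentials $\nabla\psi_{\mu_i}$ and using $\supp(g_{i\sharp}\mu)\subset B(0,R)$ to get $8\beta\epsilon+2R$, then uses $a,b\le 2R$ to produce the factor $1/(4R)$ in the square-root conversion --- exactly the bookkeeping you anticipate in your final paragraph.
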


\textbf{Remark:} Theorem \ref{thm:main} as stated applies to exact shifts and bounded scalings of a fixed reference measure -- i.e., $g_1, g_2 \in \EC_{R}$. We note that perturbations of shifts, scalings, and certain shearings are also allowed by the results of \cite{CarolineIsometry,VarunShearing}. In particular, if we allow maps $h_1,h_2$ with $\|h_i-g_i\|_2 \leq \gamma$, then the bound in Theorem \ref{thm:main} incurs an additional cost of $c\gamma^\alpha$, where $\tfrac{2}{15}\leq \alpha \leq \tfrac{1}{2}$ according to the regularity properties of the target measure $\mu$.

\begin{proof}
    By the triangle inequality,
    \begin{align*}
        \Big| W_2(g_{1\sharp}\mu,g_{2\sharp}\mu) - \widehat{W}_{2,\sigma}^{\textnormal{LOT,NN}} \Big. &\Big.(g_{1\sharp}\mu,g_{2\sharp}\mu; \hat{\psi}) \Big| \leq \left| W_2(g_{1\sharp}\mu,g_{2\sharp}\mu) - W_{2,\sigma}^{\textnormal{LOT}}(g_{1\sharp}\mu,g_{2\sharp}\mu) \right| \\[+4mm] 
        &+ \underbrace{ \left| W_{2,\sigma}^{\textnormal{LOT}}(g_{1\sharp}\mu,g_{2\sharp}\mu) - W_{2,\sigma}^{\textnormal{LOT,NN}}(g_{1\sharp}\mu,g_{2\sharp}\mu; \hat{\psi}) \right| }_{\text{(a) ICNN approx. error}} \\
        &+ \underbrace{ \left| W_{2,\sigma}^{\textnormal{LOT,NN}}(g_{1\sharp}\mu,g_{2\sharp}\mu; \hat{\psi}) - \widehat{W}_{2,\sigma}^{\textnormal{LOT,NN}}(g_{1\sharp}\mu,g_{2\sharp}\mu; \hat{\psi}) \right| }_{\text{(b) discretized $\sigma$ sampling error}}. 
    \end{align*}
    Since $g_1, g_2 \in \EC_{R}$, we have $W_{2,\sigma}^{\textnormal{LOT}}(g_{1\sharp}\mu,g_{2\sharp}\mu) = W_2(g_{1\sharp}\mu,g_{2\sharp}\mu)$ by Theorem \ref{thm:Fisometry}. We will now bound the other two terms.
    \begin{enumerate}[label=(\alph*)]

        \item Note that
        \begin{align*}
            W_{2,\sigma}^{\textnormal{LOT}}(g_{1\sharp}\mu,g_{2\sharp}\mu) = \|T_\sigma^{g_{1\sharp}\mu} - T_\sigma^{g_{2\sharp}\mu} \|_\sigma \leq \|T_\sigma^{g_{1\sharp}\mu} &- \nabla \hat{\psi}_
            {\theta_1} \|_\sigma + \|\nabla \hat{\psi}_
            {\theta_1} - \nabla \hat{\psi}_
            {\theta_2} \|_\sigma \\
            &+ \|\nabla \hat{\psi}_
            {\theta_2} - T_\sigma^{g_{2\sharp}\mu} \|_\sigma \\
            = \|T_\sigma^{g_{1\sharp}\mu} &- \nabla \hat{\psi}_
            {\theta_1} \|_\sigma + \|T_\sigma^{g_{2\sharp}\mu} - \nabla \hat{\psi}_
            {\theta_2} \|_\sigma \\
            &+ W_{2,\sigma}^{\textnormal{LOT,NN}}(g_{1\sharp}\mu,g_{2\sharp}\mu; \hat{\psi}).
        \end{align*}
        Since $T_\sigma^\mu$ is $\beta$-Lipschitz, by Proposition \ref{prop:ICNN} $\hat{\psi}_{\theta_i}$ can be chosen so that
        \begin{equation*}
            \|T_\sigma^{g_{i\sharp}\mu} - \nabla \hat{\psi}_
            {\theta_i} \|_\sigma \leq 4\beta\epsilon
        \end{equation*}
        for $i=1,2$. It follows that
        \begin{equation*}
            \left| W_{2,\sigma}^{\textnormal{LOT}}(g_{1\sharp}\mu,g_{2\sharp}\mu) - W_{2,\sigma}^{\textnormal{LOT,NN}}(g_{1\sharp}\mu,g_{2\sharp}\mu; \hat{\psi}) \right| \leq 8\beta\epsilon. 
        \end{equation*}

        \item For this bound, we will apply McDiarmid's inequality to $\widehat{W}_{2,\sigma}^{\textnormal{LOT,NN}}(\mu_i,\mu_j; \hat{\psi})^2$. Define
        \begin{equation*}
            f(X_1,\dots,X_n) = \frac{1}{n} \sum_{k=1}^n \left\| \nabla \hat{\psi}_{\theta_i}(X_k) - \nabla \hat{\psi}_{\theta_j}(X_k) \right\|^2 = \widehat{W}_{2,\sigma}^{\textnormal{LOT,NN}}(\mu_i,\mu_j; \hat{\psi})^2.
        \end{equation*}
        Let $X=(X_1,\dots,X_{k-1},X_k,X_{k+1},\dots,X_n)$ and $X' = (X_1,\dots,X_{k-1},X_k',X_{k+1},\dots,X_n)$ differ in only one argument. We then have
        \begin{equation*}
            f(X) - f(X') = \frac{1}{n} \left\| \nabla \hat{\psi}_{\theta_i}(X_k) - \nabla \hat{\psi}_{\theta_j}(X_k) \right\|^2 - \frac{1}{n} \left\| \nabla \hat{\psi}_{\theta_i}(X_k') - \nabla \hat{\psi}_{\theta_j}(X_k') \right\|^2.
        \end{equation*}
        Next, we have
        \begin{align*}
            0 \leq \left\| \nabla \hat{\psi}_{\theta_i}(X_k) - \nabla \hat{\psi}_{\theta_j}(X_k) \right\| \leq \| \nabla \hat{\psi}_{\theta_i}(X_k) &- \nabla \psi_{\mu_i}(X_k) \| + \left\| \nabla \psi_{\mu_i}(X_k) - \nabla \psi_{\mu_j}(X_k) \right\| \\
            &+ \left\| \nabla \psi_{\mu_j}(X_k) - \nabla \hat{\psi}_{\theta_i}(X_k) \right\|
        \end{align*}
        by the triangle inequality. As in the proof of Proposition \ref{prop:ICNN}, we can choose $\hat{\psi}_{\theta_i}$ such that its dual is $\frac{1}{\beta}$-strongly convex, and therefore
        \begin{equation*}
            \| \nabla \hat{\psi}_{\theta_i}(X_k) - \nabla \psi_{\mu_i}(X_k) \| \leq 4\beta \epsilon
        \end{equation*}
        by Theorem 3.6 in \cite{OT_gradientbound} (and likewise for $\hat{\psi}_{\theta_j}$). For the middle term, we have
        \begin{equation*}
            \left\| \nabla \psi_{\mu_i}(X_k) - \nabla \psi_{\mu_j}(X_k) \right\| = \left\| T_\sigma^{\mu_i}(X_k) - T_\sigma^{\mu_j}(X_k) \right\| \leq 2R
        \end{equation*}
        since $\supp(\mu_i) \subset B(0,R)$. Putting all of these inequalities together gives
        \begin{equation*}
            |f(X)-f(X')| \leq \frac{1}{n}(8\beta\epsilon + 2R)^2.
        \end{equation*}
        By McDiarmid's inequality, it follows that
        \begin{equation*}
            \Pb\left( \left| \widehat{W}_{2,\sigma}^{\textnormal{LOT,NN}}(\mu_i,\mu_j; \hat{\psi})^2 - W_{2,\sigma}^{\textnormal{LOT,NN}}(\mu_i,\mu_j; \hat{\psi})^2 \right| \geq t \right) \leq 2\exp\left( - \frac{nt^2}{8(4\beta\epsilon+R)^4} \right).
        \end{equation*}
        Now, let $a=W_{2,\sigma}^{\textnormal{LOT,NN}}(\mu_i,\mu_j; \hat{\psi})$ and $b=\widehat{W}_{2,\sigma}^{\textnormal{LOT,NN}}(\mu_i,\mu_j; \hat{\psi})$. Since $\supp(\mu_i) \subset B(0,R)$, we have $a,b \leq 2R$. If $a+b=0$ then $a=b=0$ and we are done. Otherwise,
        \begin{equation*}
            |a-b| = \frac{|a^2-b^2|}{a+b} \geq \frac{|a^2-b^2|}{4R}
        \end{equation*}
        and therefore
        \begin{equation*}
            \Pb\left( \left| \widehat{W}_{2,\sigma}^{\textnormal{LOT,NN}}(\mu_i,\mu_j; \hat{\psi}) - W_{2,\sigma}^{\textnormal{LOT,NN}}(\mu_i,\mu_j; \hat{\psi}) \right| \geq t \right) \leq 2\exp\left( - \frac{2nR^2t^2}{(4\beta\epsilon+R)^4} \right)
        \end{equation*}
        by our result from McDiarmid's inequality. Solving $\delta = 2\exp\left( - \frac{2nR^2t^2}{(4\beta\epsilon+R)^4} \right)$ gives the final result.
    \end{enumerate}
\end{proof}

Finally, Theorem \ref{thm:main} shows that we can use ICNNs to approximate the Wasserstein distance between shifts and scalings of a fixed distribution to arbitrary accuracy with high probability. In the next section, we discuss how to increase the separability between classes for point cloud classification.

\subsection{Increasing Class Separation}

Theorem \ref{thm:main} shows that we can nearly isometrically embed distributions using ICNNs.  However, there also exist a number of applications where the point clouds $X_i$ are paired with a class label $y_i$, and one requires a classifier to go along with the embedding.  
From \cite{CarolineIsometry}, we know that if the classes are of the form $\{(g_\sharp \mu_i, y_i)\}_{g\in \mathcal{E}_R}$ for $y_i \in \{-1,1\}$, one can build a classifier $f:\mathbb{R}^d\rightarrow \mathbb{R}^d$ such that
\begin{align*}
\begin{cases}
    \int \langle f(x),  T_\sigma^{\nu_j}(x) \rangle d\sigma(x) > 0 & y_j=1\\
    \int \langle f(x),  T_\sigma^{\nu_j}(x) \rangle d\sigma(x) < 0 & y_j=-1,
\end{cases}
\end{align*}
for some test measure $(\nu_j, y_j)$ generated by the same process as the training set.

Given that we have strong approximations $\nabla \hat{\psi}_{\theta_i}$ of $T_\sigma^{\mu_i}$, we can similarly learn a weight function $W_\phi:\mathbb{R}^d\rightarrow\mathbb{R}^d$ and build a classifier
\begin{equation} \label{eq:fhat}
    \rho \Big( \tfrac{1}{n} \sum_{k=1}^n \langle W_\phi(x_k), \nabla \hat{\psi}_{\theta_i}(x_k) \rangle \Big)
\end{equation}
where $\rho$ is a sigmoid or softmax for classification. 
Note that there are no restrictions on $W_\phi$ and it can be trained as any other deep network.  In practice, we alternate training an epoch of the neural OT maps and training an epoch of the weights.  One significant benefit of this approach is that the weights may learn a high accuracy classifier before the neural OT maps have completely converged, which leads to early stopping.  Note that the inner product form of \eqref{eq:fhat} gives the needed permutation invariance required to have a distribution classifier \cite{DeepSets}.

\section{Experiments}

In this section, we compare DeepSets \cite{DeepSets} and the LOT embedding for point cloud classification. Since the optimal transport maps for $\mu_i$ are not known a priori, we begin by learning those before training our classifier. Recall that $T_\sigma^{\mu_i} = \nabla \psi_i$ for some convex function $\psi_i: \R^d \to \R$. \cite{NeuralOTSolvers} evaluates several neural dual OT solvers (i.e., those that approximate potentials $\psi$ and $\varphi$ for $T_\sigma^\mu$ and $T_\mu^\sigma$ using various neural network architectures and optimization schemes). In our implementation, we employ a non-minimax solver proposed in \cite{W2GAN} which approximates the potential functions $\psi_i$ with ICNNs to guarantee the convexity required by Brenier's theorem. We denote the approximation of $\psi_i$ by $\hat{\psi}_{\theta_i}$, where $\theta_i$ denotes the parameters of the corresponding neural network.

One difficulty of this approach is that it requires training the transport maps $\{\nabla \hat{\psi}_{\theta_i}\}_{i=1}^N$ simultaneously. However, since the LOT embedding uses the same source distribution $\sigma$ for each target distribution, we can take a batch samples $X = \{x_k\}_{k=1}^n$ from $\sigma$ and $X_i = \{x_k^{(i)}\}_{k=1}^n$ from each $\mu_i$ to train all OT maps on this batch.

\subsection{AML Classification}

We compare the methods on a dataset of patient cell measurements to classify whether patients have acute myeloid leukemia (AML). The dataset includes 359 total patients, 43 of which have AML. We randomly select 50\% of the AML-positive patients and add twice as many negative patients to the training set. This ensures a 2:1 class ratio in the dataset, so we can worry less about the class imbalance of the data as a whole. Of the remaining patients, we randomly select 10\% from each class for the validation data and leave the rest as testing data.

Each patient has data from about 30,000 cells, with 7 physical and chemical measurements taken from each cell via flow cytometry \cite{FlowCytometry}. After eliminating cells with missing or incomplete measurements, we randomly select 1,000 cells per patient as a point cloud representing that patient. In other words, we have a collection of distributions $\{\mu_i\}_{i=1}^{359}$ over the patient cell measurements, and from each $\mu_i$ we sample $X_i = \{x_k^{(i)}\}_{k=1}^{1000}$. Our goal is to find a classifier $\hat{f}$ of the form \eqref{eq:fhat} minimizing
\begin{equation*}
    \frac{1}{N} \sum_{i=1}^N \ell(\hat{f}(X_i),y_i)
\end{equation*}
where $\ell(p,y) = -[y \ln(p) + (1-y)\ln(1-p)]$ is the binary cross-entropy loss function.

\begin{table}[] 
\begin{center}
\begin{tabular}{|l|l|l|l|l|}  \hline
          & DeepSets & DeepSets Bagging & LOT Classifier & LOT resample x10 \\ \hline
precision & 0.17 & 0.23        & 0.67        & \textbf{0.67} \\ \hline
recall    & 0.33 & 0.44         & 0.44        & \textbf{0.67} \\ \hline
accuracy  & 0.84 & 0.86        & 0.95        & \textbf{0.96} \\ \hline
\end{tabular}
\centering
\caption{AML data test metrics.} \label{table:AML}
\end{center}
\end{table}

For each method, we compare the precision, recall, and accuracy metrics across the test data. Using the natural decision rule $\hat{y}_i = \bm{1}\{\hat{f}(X_i)>0.5\}$, these metrics can be defined as
\begin{equation*}
    \textnormal{precision}=\frac{TP}{TP+FP} \quad \quad \textnormal{recall}=\frac{TP}{TP+FN} \quad \quad \textnormal{accuracy}=\frac{TN+TP}{N}
\end{equation*}
where $TP=\textnormal{true positives}$, $FP=\textnormal{false positives}$, $FN=\textnormal{false negatives}$, and $TN=\textnormal{true negatives}$.

Table \ref{table:AML} shows test metrics for each method. For the first column, we applied one DeepSets model trained for 1000 epochs. DeepSets bagging refers to training 10 DeepSets models and averaging their predicted probabilities to make a classification. The third column shows the results for our LOT classifier, which alternately trained ICNNs for each OT map for 10 epochs followed by trainer a softmax classifier for 10 epochs until a total of 1000 epochs was reached. Because our LOT classifier is based on the OT map pushforward data $\hat{\psi}_{\theta_i}(\{x_k\}_{k=1}^{100})$ where $x_k \sim \sigma$, we can easily take another sample from $\sigma$, push it forward according to all learned OT maps, and classify again based on this new sample. LOT resample x10 refers to this technique, where we resample from $\sigma$ 10 times and average the predicted probabilities to make a classification. We see that this technique achieves the highest test metrics, although the LOT classifier without resampling performs similarly.

    \bibliographystyle{plain}
    \bibliography{references}

\end{document}